\documentclass[conference]{IEEEtran}
\usepackage{amsmath,amssymb,amsfonts}
\usepackage{algorithmic}
\usepackage{graphicx}
\usepackage{textcomp}
\usepackage{xcolor}
\usepackage{times}
\usepackage{enumerate}
\usepackage{amsmath,amssymb,amsfonts}
\usepackage{mathtools} 
\usepackage{amsthm} 

\usepackage{cleveref}
\usepackage{mathrsfs}
\usepackage{comment}
\usepackage{cite}

\def\BibTeX{{\rm B\kern-.05em{\sc i\kern-.025em b}\kern-.08em
    T\kern-.1667em\lower.7ex\hbox{E}\kern-.125emX}}

\usepackage{placeins}
\usepackage{amsthm}
\newtheorem{definition}{Definition}
\newtheorem{proposition}{Proposition}
\newtheorem{theorem}{Theorem}
\newtheorem{corollary}{Corollary}
\newtheorem{remark}{Remark}
\newtheorem{lemma}{Lemma}
\usepackage{geometry}

\ifCLASSINFOpdf
\else
\fi
\IEEEoverridecommandlockouts

\begin{document}
%
\title{Instantiating Bayesian CVaR lower bounds in Interactive Decision Making Problems}
%
%
%

\author{
\IEEEauthorblockN{Raghav Bongole, Tobias J. Oechtering, and Mikael Skoglund}\\
\IEEEauthorblockA{Department of Information Science and Engineering (ISE)\\
KTH Royal Institute of Technology\\
}
\thanks{This work is supported by the Knut and Wallenberg Foundation.}
}

%
%

\markboth{Journal of \LaTeX\ Class Files,~Vol.~13, No.~9, September~2014}%
{Shell \MakeLowercase{\textit{et al.}}: Bare Demo of IEEEtran.cls for Journals}
%



\maketitle


{\begin{abstract}
Recent work established a generalized-Fano framework for lower bounding prior-predictive (Bayesian) CVaR in interactive statistical decision making. In this paper, we show how to instantiate that framework in concrete interactive problems and derive explicit Bayesian CVaR lower bounds from its abstract corollaries. Our approach compares a hard model with a reference model using squared Hellinger distance, and combines a lower bound on a reference hinge term with a bound on the distinguishability of the two models. We apply this approach to canonical examples, including Gaussian bandits, and obtain explicit bounds that make the dependence on key problem parameters transparent. These results show how the generalized-Fano Bayesian CVaR framework can be used as a practical lower-bound tool for interactive learning and risk-sensitive decision making.
\end{abstract}
}

\begin{IEEEkeywords}
information theory and control, statistical learning
\end{IEEEkeywords}

\section{Introduction}
Lower bounds play a central role in statistical decision making by quantifying the smallest loss or cost that any problem must incur. In a Bayesian formulation of statistical decision making, one assumes a prior \(\mu\) on the model class \(\mathcal{M}\) and evaluates performance under the prior-predictive probability. The resulting Bayes risk provides a natural benchmark for what can be achieved on average under the prior, and lower bounds on this quantity reveal information-theoretic limits of learning and decision making. Classical tools such as Le Cam's method, Assouad's lemma, and Fano's inequality are fundamental techniques for deriving such lower bounds in passive statistical estimation \cite{tsybakov2009introduction}.

Many modern learning and control problems are interactive: the learner's actions affect what data are observed. Chen et al.~\cite{chen2024assouad} formalize this through the interactive statistical decision making (ISDM) framework. In ISDM, a model \(M\in\mathcal{M}\) specifies the environment, an algorithm \(\textup{Alg}\in\mathcal{D}\) chooses actions or decisions, and the resulting observation or transcript \(X\) is distributed according to an algorithm-dependent law
\(
X\sim P^{M,\textup{Alg}}.
\)
This algorithm dependence is the key distinction from passive estimation, where the data law is fixed and independent of the estimator. The ISDM viewpoint encompasses passive estimation, multi-armed bandits, and reinforcement learning within a common framework \cite{chen2024assouad,lattimore2020bandit,sutton2018reinforcement}. It is especially relevant in control and sequential decision making, where information acquisition and performance are inherently coupled: one must act in order to learn, and those actions shape both future information and eventual cost. Most lower-bound results in these settings focus on expected loss or expected regret. While expected risk is a natural measure of performance, it can conceal rare but costly failures. This motivates risk-sensitive criteria that emphasize the tail of the loss distribution. A standard example is Conditional Value-at-Risk,
\(
\textup{CVaR}_\alpha(L),
 \alpha\in[0,1),
\)
which measures the expected loss in the worst \((1-\alpha)\)-tail. For losses, \(\textup{CVaR}\) is more sensitive to adverse events than the mean and is widely used in finance, optimization, and risk-sensitive control \cite{rockafellar2000optimization,rockafellar2002general,acerbi2002expected}. Related tail criteria, such as quantiles and Value-at-Risk, have also been studied in statistical learning; see, for example, \cite{ma2024highprobability}. In contrast to Bayes expected-risk lower bounds, Bayesian \(\mathrm{CVaR}\) lower bounds quantify how small the tail-sensitive
performance criterion can be under the prior-predictive probability law, and hence what no algorithm can improve on. A recent step in this direction is the work of Bongole et al.~\cite{bongole2026generalizing}, which extends the interactive Fano method of Chen et al.~\cite{chen2024assouad}. They consider bounded transforms of the loss and derive a generalized interactive Fano inequality for expected bounded transforms under the prior-predictive law. As a consequence, they obtain a general lower-bound template for Bayesian \(\textup{CVaR}\) of bounded losses in ISDM. Their results show that tail-sensitive Bayesian performance criteria can be treated within the same information-theoretic framework as expected-risk and quantile-type criteria.

The main challenge left open by \cite{bongole2026generalizing} is instantiation. The corollaries identify the quantities: a benchmark term under a reference law and a divergence budget, but applying them in a concrete problem still requires a problem-specific calculation. In this work, the objective is to make this framework usable in concrete settings. To this end, a reusable two-point Bayesian \(\textup{CVaR}\) template is extracted from the generalized-Fano corollaries and then instantiated in both a passive Gaussian mean estimation problem and an interactive two-armed Gaussian bandit problem. These examples show how explicit worst-case Bayesian \(\textup{CVaR}\) lower bounds can be obtained and how the resulting bounds compare with the corresponding Bayesian expected-loss benchmarks.

\newgeometry{top=0.75in,left=0.75in,right=0.75in,bottom=0.75in}
Recent work on \(\mathrm{CVaR}\) has largely focused on risk-sensitive criteria that are different from the prior-predictive quantity studied here. In stochastic bandits, Baudry et al.~\cite{baudry2021optimal} evaluate each arm by the \(\mathrm{CVaR}\) of its reward distribution and analyze the corresponding cumulative regret relative to the arm with best \(\mathrm{CVaR}\). Tan and Weng~\cite{tan2023cvar} instead study upper bounds on the \(\mathrm{CVaR}\) of the regret random variable itself. Wang et al.~\cite{wang2023near} consider a cumulative \(\mathrm{CVaR}\)-regret criterion in bandits and reinforcement learning and analyze its expectation, so the \(\mathrm{CVaR}\)-based performance gap is defined within each environment and then aggregated. Du et al.~\cite{du2022provably} study an iterated-\(\mathrm{CVaR}\) objective in episodic reinforcement learning, which yields a dynamic nested risk measure. 

In passive settings, Thomas and Learned-Miller~\cite{thomas2019concentration} study estimation of \(\mathrm{CVaR}\) from i.i.d.\ samples, Soma and Yoshida~\cite{soma2020statistical} study learning with a population objective of the form \(\mathrm{CVaR}\) of the loss under a fixed data-generating distribution, and Mhammedi et al.~\cite{mhammedi2020pacbayes} derive PAC-Bayesian generalization bounds for empirical \(\mathrm{CVaR}\); in each case, \(\mathrm{CVaR}\) is evaluated under a single fixed model dependent law. By contrast, our setting is Bayesian: one first draws a model \(M\) from a prior \(\mu\), then generates the transcript under \(P^{M,\mathrm{Alg}}\), and finally evaluates the tail risk of the resulting loss under this prior-predictive mixture law. 

In this work, we study Bayesian \(\textup{CVaR}\) under the prior-predictive law, which is a natural risk-sensitive analogue of Bayes risk: it captures not only average performance under model uncertainty, but also the severity of unfavorable outcomes induced jointly by the prior and the algorithm. Our goal is to derive explicit information-theoretic lower bounds for this prior-predictive \(\textup{CVaR}\) criterion by instantiating the generalized-Fano framework of Bongole et al.~\cite{bongole2026generalizing}.

\subsection{Contributions}

The main contribution is to make the abstract Bayesian \(\textup{CVaR}\) lower-bound framework of \cite{bongole2026generalizing} concrete and usable in canonical problems. In particular:
\begin{itemize}
    \item We extract a reusable two-point Hellinger--\(\textup{CVaR}\) lower-bound template from the generalized interactive Fano corollaries. 

    \item We instantiate this template in both a passive and an interactive setting, namely two-point Gaussian mean estimation and the two-armed Gaussian bandit.

     \item For both examples, we derive explicit fixed-prior Bayesian \(\textup{CVaR}\) lower bounds that recover the same order as the corresponding Bayes-risk benchmarks, while also retaining dependence on the risk level \(\alpha\). In particular, these immediately imply corresponding worst-case (over priors) Bayesian \(\textup{CVaR}\) lower bounds.

    \item We clarify the relation between Bayesian \(\textup{CVaR}\) and Bayesian expected loss through the inequality \(\textup{CVaR}_\alpha[L]\ge \mathbb{E}[L]\), which provides a natural benchmark for interpreting the derived bounds and highlights what is gained by a tail-sensitive lower-bound analysis.

\end{itemize}

\subsection{Notation}

Random variables are denoted by capital letters, such as \(M\), \(X\), \(Y_t\), and \(H_T\); realizations are denoted by lowercase letters. We write \(\mathbb{E}\) for expectation and \(\mathbf{1}\{\cdot\}\) for indicators. For \(z\in\mathbb{R}\), let
\(
[z]_+ := \max\{z,0\}.
\)

For a model \(m\in\mathcal M\) and an algorithm \(\textup{Alg}\), we write
\(
P^{m,\textup{Alg}}
\)
for the law of the observation or transcript \(X\) induced by running \(\textup{Alg}\) in model \(m\). Equivalently,
\(
X\sim P^{m,\textup{Alg}}
\)
means that \(P^{m,\textup{Alg}}\) is the probability distribution of \(X\) given model \(m\) and algorithm \(\textup{Alg}\). In two-point constructions, we use the shorthand
\(
P_i:=P^{M_i,\textup{Alg}}, \qquad i\in\{1,2\}.
\)

For a prior \(\mu\in\mathcal{P}(\mathcal M)\), where \(\mathcal{P}(\mathcal M)\) is the set of all probability distributions on \(\mathcal M\), we write
\(
\mathbb{E}_{\mu,P^{M,\textup{Alg}}}[\cdot]
\)
for expectation under the joint law obtained by first drawing \(M\sim\mu\) and then \(X\sim P^{M,\textup{Alg}}\). Similarly, for a reference law \(Q\) on the transcript space, we write
\(
\mathbb{E}_{\mu,Q}[\cdot]
\)
when \(M\sim\mu\) and \(X\sim Q\) are independent. 
We write \(\textup{Bern}(p)\) for the Bernoulli distribution with mean \(p\). For probability measures \(P\) and \(Q\) on a common measurable space, \(D_{\textup{KL}}(P\|Q)\) denotes the Kullback--Leibler divergence, and \(D_{H^2}(P\|Q)\) denotes the squared Hellinger divergence, where
\(
D_{\textup{KL}}(P\|Q)
:=
\int \log\!\left(\frac{dP}{dQ}\right)\,dP \text{ and }
D_{H^2}(P\|Q)
:=
1-\int \sqrt{\frac{dP}{d\lambda}\frac{dQ}{d\lambda}}\,d\lambda,
\)
where \(\lambda\) is any common dominating measure. We use the standard inequality
\(
D_{H^2}(P\|Q)\le D_{\textup{KL}}(P\|Q).
\)

For a loss random variable \(L\), \(\textup{CVaR}_\alpha(L)\) denotes the upper-tail Conditional Value-at-Risk at level \(\alpha\in[0,1)\), defined by
\(
\textup{CVaR}_\alpha(L)
:=
\min_{t\in\mathbb{R}}
\left\{
t+\frac{1}{1-\alpha}\,\mathbb{E}[(L-t)_+]
\right\}.
\)

\section{Problem setup}

Interactive statistical decision making (ISDM) provides a common language for passive estimation, sequential decision making, and learning under uncertainty. Following \cite{chen2024assouad}, an ISDM problem is specified by
\begin{itemize}
    \item a model class \(\mathcal{M}\),
    \item an algorithm class \(\mathcal{D}\),
    \item a transcript space \(\mathcal{X}\),
    \item and a nonnegative loss function
    \(
    L:\mathcal{M}\times \mathcal{X}\to \mathbb{R}_+.
    \)
\end{itemize}

For each model \(M\in\mathcal{M}\) and algorithm \(\textup{Alg}\in\mathcal{D}\), the interaction between the algorithm and the model induces a law
\(
X\sim P^{M,\textup{Alg}}
\)
on the transcript \(X\in\mathcal{X}\). Thus:
\begin{itemize}
    \item in a passive estimation problem, if one observes samples \(Y_1,\dots,Y_n\) and then outputs an estimate \(\widehat\theta=\textup{Alg}(Y_1,\dots,Y_n)\), the transcript \(X\) may be taken to include both the batch of observations and the final decision, e.g.
    \(
    X=(Y_1,\dots,Y_n,\widehat\theta);
    \)
    \item in a sequential decision problem, \(X\) may be the full interaction history, including actions, observations, and any terminal decision, for example
    \(
    X=(A_1,Y_1,\dots,A_T,Y_T,\widehat d).
    \)
\end{itemize}
This formulation therefore covers both passive and interactive settings within the same framework.

Under a prior \(\mu\in\mathcal{P}(\mathcal{M})\), the standard Bayesian performance criterion is the Bayes risk
\(
\mathbb{E}_{\mu,P^{M,\textup{Alg}}}[L(M,X)].
\)
Much of the existing lower-bound literature focuses on this expected loss. In this work, the focus is instead on the risk-sensitive criterion
\(
\textup{CVaR}^{\,\mu\otimes P^{M,\mathrm{Alg}}}_\alpha(L) =
\)
\[
\min_{t\in\mathbb{R}}
\left\{
t+\frac{1}{1-\alpha}
\int (L(m,x)-t)_+\,(\mu\otimes P^{M,\mathrm{Alg}})(dm,dx)
\right\}.
\]

 This criterion captures tail-sensitive performance and is especially relevant when rare but costly failures matter.

The main aim in this work is to turn the abstract Bayesian \(\textup{CVaR}\) lower-bound framework of \cite{bongole2026generalizing} into concrete and usable converse results. The emphasis is on showing how its corollaries can be instantiated in canonical problems. In particular, the paper develops a reusable two-point Bayesian \(\textup{CVaR}\) template and applies it in both a passive example and an interactive example, thereby illustrating how explicit \(\textup{CVaR}\) lower bounds can be derived directly from the generalized interactive Fano method.

\subsection{Previous results}
\label{sec:previous-results}

The analysis below builds on the Bayesian \(\textup{CVaR}\) lower-bound framework of \cite{bongole2026generalizing}. We recall the two corollaries that are used in our instantiations.

\begin{corollary}[Bounded-hinge lower bound {\cite[Cor.~2]{bongole2026generalizing}}]
\label{cor:hinge-lower}
Assume \(0\le L(M,X)\le L_{\max}\), and fix \(t\in\mathbb R\). Define
\[
\phi_t(\ell):=\frac{(\ell-t)_+}{L_{\max}}\in[0,1],
\]
\[
b_t
:=
\mathbb{E}_{\mu,Q}[\phi_t(L)]
=
\frac{1}{L_{\max}}\mathbb{E}_{\mu,Q}[(L-t)_+].
\]
Let
\[
B:=\mathbb{E}_{M\sim\mu}D_f(P^{M,\textup{Alg}}\|Q),
\]
and
\[
a_f^{-}(B;b)
:=
\inf\Bigl\{a\in[0,1]:
D_f\big(\textup{Bern}(a)\,\big\|\,\textup{Bern}(b)\big)\le B
\Bigr\}.
\]
Then
\[
\mathbb{E}_{\mu,P^{M,\textup{Alg}}}[(L-t)_+]
\ge
L_{\max}\,a_f^{-}(B;b_t).
\]
\end{corollary}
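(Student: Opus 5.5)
The plan is a data-processing argument on the joint space of \((M,X)\), collapsed to a Bernoulli comparison. Fix \(t\) and write \(\Phi:=\phi_t(L(M,X))\in[0,1]\). Set \(a:=\mathbb{E}_{\mu,P^{M,\textup{Alg}}}[\Phi]\), so that \(\mathbb{E}_{\mu,P^{M,\textup{Alg}}}[(L-t)_+]=L_{\max}a\). Also let \(b_t=\mathbb{E}_{\mu,Q}[\Phi]\), as in the statement. It then suffices to show
\[
D_f\big(\textup{Bern}(a)\,\|\,\textup{Bern}(b_t)\big)\le B .
\]
Once this holds, \(a\) belongs to the set over which \(a_f^{-}(B;b_t)\) takes its infimum. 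Hence \(a\ge a_f^{-}(B;b_t)\), and multiplying by \(L_{\max}\) gives the claim.

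\textbf{Step 1: convexity over the prior.} Compare the two joint laws \(\mu\otimes P^{M,\textup{Alg}}\) and \(\mu\otimes Q\) on \(\mathcal M\times\mathcal X\). They have the same \(M\)-marginal. Their conditional laws given \(M=m\) are \(P^{m,\textup{Alg}}\) and \(Q\). By the chain rule for \(f\)-divergences with a common first marginal (joint convexity suffices), we get
\[
D_f(\mu\otimes P^{M,\textup{Alg}}\,\|\,\mu\otimes Q)=\mathbb{E}_{M\sim\mu}D_f(P^{M,\textup{Alg}}\|Q)=B .
\]

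\textbf{Step 2: reduce the bounded statistic to a Bernoulli.} Since \(\Phi\) takes values in \([0,1]\) rather than \(\{0,1\}\), plain data processing through \(\Phi\) is not enough. I randomize instead. Let \(U\sim\mathrm{Unif}[0,1]\) be independent of everything, and set \(Z:=\mathbf 1\{U\le \Phi\}\). The map \((m,x)\mapsto Z\) is a Markov kernel. Under the first joint law \(Z\sim\textup{Bern}(a)\), and under the second \(Z\sim\textup{Bern}(b_t)\). The data-processing inequality for \(f\)-divergences then gives \(D_f(\textup{Bern}(a)\|\textup{Bern}(b_t))\le B\), which completes the argument.

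\textbf{Main obstacle.} I expect the delicate point to be the measure-theoretic justification of Step 1 in full generality, that is, equality or at least \(\le\) for \(D_f\) of product-kernel laws. This needs measurability of \(m\mapsto P^{m,\textup{Alg}}\) and the chain rule for general \(f\). I would handle it through the variational or convexity form: \(D_f\) is jointly convex and lower semicontinuous, so Jensen's inequality over \(\mu\) gives the needed direction \(D_f(\text{joint}\|\text{joint})\le \mathbb{E}_\mu D_f(P^{M,\textup{Alg}}\|Q)\). Only this inequality is required for the argument.

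A secondary edge case is when \(a_f^{-}\)'s defining set behaves badly. No extra monotonicity argument is needed, because membership of \(a\) in the set directly bounds the infimum from above by \(a\).
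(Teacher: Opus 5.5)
Your proof is correct and is the standard route behind this corollary, which the paper does not prove but imports from \cite{bongole2026generalizing}: convexity gives $D_f(\mu\otimes P^{M,\textup{Alg}}\,\|\,\mu\otimes Q)\le B$, the randomized kernel $Z=\mathbf 1\{U\le \phi_t(L)\}$ with data processing gives $D_f(\textup{Bern}(a)\,\|\,\textup{Bern}(b_t))\le B$, and membership of $a$ in the defining set gives $a\ge a_f^-(B;b_t)$. One caveat comes from the statement itself: $\phi_t\in[0,1]$, which you need for $\Pr(Z=1)=\mathbb E[\Phi]$, holds only for $t\ge 0$, and for $t<0$ the claim genuinely fails (take $\mu=\delta_m$, $L=L_{\max}X$ with $X\in\{0,1\}$, $P^{m,\textup{Alg}}=\textup{Bern}(0.4)$, $Q=\textup{Bern}(0.5)$, $D_f=D_{\textup{KL}}$ and $t=-L_{\max}/2$; then $b_t=1$, so $a_f^-(B;b_t)=1$, while $\mathbb E_{\mu,P^{M,\textup{Alg}}}[(L-t)_+]=0.9\,L_{\max}$), so the corollary should be read for $t\ge 0$, which is all the CVaR application needs because for $0\le L\le L_{\max}$ the minimization over $t$ can be restricted to $[0,L_{\max}]$ before the hinge bound is applied.
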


\begin{corollary}[Bayesian \(\textup{CVaR}\) lower bound {\cite[Cor.~3]{bongole2026generalizing}}]
\label{cor:cvar-lower-bt}
Under the assumptions of Corollary~\ref{cor:hinge-lower},
\[
\textup{CVaR}_\alpha(L)
\ge
\min_{t\in\mathbb R}
\left\{
t+\frac{L_{\max}}{1-\alpha}\,a_f^{-}(B;b_t)
\right\}.
\]
\end{corollary}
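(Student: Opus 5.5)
The inequality in Corollary~\ref{cor:cvar-lower-bt} follows from the variational (Rockafellar--Uryasev) definition of CVaR combined with Corollary~\ref{cor:hinge-lower}, applied pointwise in the threshold \(t\). Throughout, \(L\) denotes \(L(M,X)\) with \((M,X)\sim\mu\otimes P^{M,\textup{Alg}}\), and \(\textup{CVaR}_\alpha(L)\) is the prior-predictive CVaR from the problem setup. By definition,
\[
\textup{CVaR}_\alpha(L)=\min_{t\in\mathbb R}\Bigl\{t+\tfrac{1}{1-\alpha}\,\mathbb{E}_{\mu,P^{M,\textup{Alg}}}[(L-t)_+]\Bigr\}.
\]
The plan is to lower bound the objective for every fixed \(t\) and then take the infimum over \(t\) on both sides.

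First I would fix an arbitrary \(t\in\mathbb R\). The assumptions of Corollary~\ref{cor:hinge-lower} hold by hypothesis: \(0\le L\le L_{\max}\), a reference law \(Q\), and \(B=\mathbb{E}_{M\sim\mu}D_f(P^{M,\textup{Alg}}\|Q)\). That corollary therefore gives
\[
\mathbb{E}_{\mu,P^{M,\textup{Alg}}}[(L-t)_+]\ge L_{\max}\,a_f^{-}(B;b_t).
\]
Since \(1/(1-\alpha)>0\) for \(\alpha\in[0,1)\), multiplying by \(1/(1-\alpha)\) and adding \(t\) preserves the inequality:
\[
t+\tfrac{1}{1-\alpha}\mathbb{E}_{\mu,P^{M,\textup{Alg}}}[(L-t)_+]\;\ge\; t+\tfrac{L_{\max}}{1-\alpha}a_f^{-}(B;b_t).
\]
This holds for every \(t\). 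Taking the infimum over \(t\) on the left recovers \(\textup{CVaR}_\alpha(L)\). On the right, the infimum is at most the value at each \(t\), so it is a lower bound on the left-hand objective at every \(t\) and hence on its minimum. This yields the claimed inequality.

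The only delicate point is whether the right-hand "\(\min\)" is attained. The function \(t\mapsto a_f^{-}(B;b_t)\) need not be continuous, so I would read the \(\min\) as an \(\inf\); the inequality holds regardless. I would also remark that only \(t\in[0,L_{\max}]\) matters. For \(t\ge L_{\max}\) we have \(b_t=0\), so the right-hand objective is at least \(t\ge L_{\max}\). For \(t<0\) we have \((L-t)_+=L-t\) and \(b_t=(\mathbb{E}_{\mu,Q}[L]-t)/L_{\max}\), so the objective is \(t\) plus a term that stays bounded as \(t\to-\infty\). Either way the objective is dominated by values in the compact range, which justifies writing a minimum in the usual sense.

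I expect no real obstacle beyond checking that the minimum over \(t\) in the CVaR definition exists under the prior-predictive law. It does, because the objective is convex in \(t\) and coercive when \(L\) is bounded and \(\alpha<1\); in any case only the infimum is used.
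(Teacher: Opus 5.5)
Your core argument is correct and is the intended one. The paper does not reprove this corollary; it recalls it from \cite{bongole2026generalizing}. Bounding the Rockafellar--Uryasev objective pointwise in $t$ with Corollary~\ref{cor:hinge-lower}, using $1/(1-\alpha)>0$, and then passing to infima is exactly how Corollary~\ref{cor:cvar-lower-bt} follows from Corollary~\ref{cor:hinge-lower}. Reading the right-hand $\min$ as an $\inf$ is also the right call.

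Your remark about $t<0$ is wrong, however, and it sits at the one place where care is needed.

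\textbf{Corollary~\ref{cor:hinge-lower} does not apply for $t<0$.} There the transform $\phi_t(\ell)=(\ell-t)_+/L_{\max}$ is not $[0,1]$-valued: take $\ell=L_{\max}$. So the boundedness hypothesis behind Corollary~\ref{cor:hinge-lower} fails, even though its statement says ``fix $t\in\mathbb R$''. Concretely, $b_t=(\mathbb{E}_{\mu,Q}[L]-t)/L_{\max}$ exceeds $1$ for $t$ sufficiently negative. Then $a_f^-(B;b_t)$ is undefined, because $\textup{Bern}(b_t)$ is not a distribution. Your main chain invokes Corollary~\ref{cor:hinge-lower} at every $t\in\mathbb R$, so it inherits this problem.

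\textbf{Your premise contradicts your conclusion.} If the added term stayed bounded as $t\to-\infty$, then $t$ plus it would tend to $-\infty$. The right-hand infimum would then be $-\infty$, rather than being dominated by values on $[0,L_{\max}]$.

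\textbf{The repair} is to restrict the exact minimization on the left before invoking Corollary~\ref{cor:hinge-lower}.
\begin{itemize}
\item For $t<0$, the objective equals $\frac{\mathbb{E}[L]}{1-\alpha}-\frac{\alpha t}{1-\alpha}\ge\frac{\mathbb{E}[L]}{1-\alpha}$, which is its value at $t=0$.
\item For $t>L_{\max}$, the objective equals $t$, which exceeds its value $L_{\max}$ at $t=L_{\max}$.
\end{itemize}
Hence $\textup{CVaR}_\alpha(L)$ is the minimum over $t\in[0,L_{\max}]$, where $\phi_t\in[0,1]$ genuinely holds. Applying Corollary~\ref{cor:hinge-lower} only there gives the bound with $\inf_{t\in[0,L_{\max}]}$. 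That in turn is at least the infimum over $\mathbb R$ whenever the latter is defined. This matches how the paper later restricts $t$ to $[0,L_{\max}]$ in the examples.

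A minor point: for $\alpha=0$ the left-hand objective is constant on $t\le 0$, so your claim that it is coercive for all $\alpha<1$ is inaccurate. Its minimum is still attained.
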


Corollaries~\ref{cor:hinge-lower}--\ref{cor:cvar-lower-bt} reduce the derivation of explicit tail-sensitive lower bounds to a concrete instantiation problem: choose a reference law \(Q\), lower bound the corresponding hinge term \(b_t\), and upper bound the divergence budget \(B\). 

\section{Results}
\label{sec:results}
This section presents the reusable two-point Hellinger--\(\textup{CVaR}\) bound and the baseline comparison with Bayes risk that underlie the later examples. Full proofs are deferred to Appendix~\ref{app:proofs}. 

The first result is a reusable two-point Hellinger--\(\textup{CVaR}\) bound that will be instantiated in the examples below. 

\begin{lemma}[Two-point Hellinger--\(\textup{CVaR}\) lower bound]
\label{lem:two-point-hellinger-cvar}
Let \(\mathcal M=\{M_1,M_2\}\), let
\[
\mu=\frac12\delta_{M_1}+\frac12\delta_{M_2},
\]
and fix an algorithm \(\textup{Alg}\). Let
\[
P_i:=P^{M_i,\textup{Alg}},\qquad i=1,2.
\]
Let \(L:\mathcal M\times\mathcal X\to[0,L_{\max}]\) be a bounded loss. Assume that, for some \(C\in[0,2L_{\max}]\),
\[
L(M_1,x)+L(M_2,x)\ge C
\qquad
\text{for all }x\in\mathcal X,
\tag{H1}
\]
and that
\[
D_{H^2}(P_1\|P_2)\le \Gamma_H.
\tag{H2}
\]
Then, for every \(\alpha\in[0,1)\),
\[
\textup{CVaR}_\alpha^{\mu\otimes P^{M,\textup{Alg}}}(L)
\ge\]
\[
\min_{t\in\mathbb R}
\left\{
t+\frac{L_{\max}}{1-\alpha}
\left(
\sqrt{\left(\frac{C/2-t}{L_{\max}}\right)_+}
-\sqrt{\Gamma_H}
\right)_+^2
\right\}.
\]
\end{lemma}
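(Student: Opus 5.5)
The plan is to apply Corollary~\ref{cor:cvar-lower-bt} with \(f\) chosen so that \(D_f=D_{H^2}\) and with reference law \(Q:=P_2\). The lower bound there holds for every \(t\), so it suffices to bound each term \(a_f^{-}(B;b_t)\) from below pointwise in \(t\). The minimum over \(t\) of a pointwise smaller function is smaller, which then gives the claim. Three ingredients are needed: an upper bound on the budget \(B\), a lower bound on the reference hinge term \(b_t\), and an explicit lower bound on the Bernoulli inverse \(a_{H^2}^{-}(B;b)\).

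\textbf{Budget.} With \(Q=P_2\) and the uniform prior,
\[
B=\tfrac12 D_{H^2}(P_1\|P_2)+\tfrac12 D_{H^2}(P_2\|P_2)\le \Gamma_H/2,
\]
by (H2).

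\textbf{Hinge term.} Under \(\mu\otimes Q\), the model and the transcript are independent. Hence
\[
b_t=\frac{1}{2L_{\max}}\,\mathbb E_{P_2}\big[(L(M_1,X)-t)_+ + (L(M_2,X)-t)_+\big].
\]
Subadditivity of the positive part, \((u)_+ + (v)_+\ge (u+v)_+\), gives a pointwise lower bound on the bracket of \((L(M_1,X)+L(M_2,X)-2t)_+\). By (H1) this is at least \((C-2t)_+\). Therefore
\[
b_t\ge \frac{(C/2-t)_+}{L_{\max}} =: \underline b_t .
\]

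\textbf{Bernoulli inverse (main step).} I expect this to be the crux. With the paper's normalization, \(D_{H^2}(P\|Q)=\tfrac12\|\sqrt{p}-\sqrt{q}\|_2^2\). For Bernoullis this reads
\[
D_{H^2}(\mathrm{Bern}(a)\|\mathrm{Bern}(b))=\tfrac12\Big[(\sqrt a-\sqrt b)^2+(\sqrt{1-a}-\sqrt{1-b})^2\Big]\ge \tfrac12(\sqrt a-\sqrt b)^2 .
\]
Hence any feasible \(a\) with \(D_{H^2}\le B\) satisfies \(\sqrt a\ge \sqrt b-\sqrt{2B}\). Taking the infimum gives
\[
a_{H^2}^{-}(B;b)\ge(\sqrt b-\sqrt{2B})_+^2 .
\]
This lower bound is nondecreasing in \(b\) and nonincreasing in \(B\). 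So I can substitute \(b_t\ge\underline b_t\) and \(\sqrt{2B}\le\sqrt{\Gamma_H}\), obtaining
\[
a_{H^2}^{-}(B;b_t)\ge\Big(\sqrt{\underline b_t}-\sqrt{\Gamma_H}\Big)_+^2 .
\]
Working with this explicit bound avoids having to prove monotonicity of \(a^{-}\) itself in \(b\).

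\textbf{Conclusion.} Plugging this into Corollary~\ref{cor:cvar-lower-bt} gives, for each \(t\),
\[
t+\frac{L_{\max}}{1-\alpha}a^{-}(B;b_t)\ge t+\frac{L_{\max}}{1-\alpha}\Big(\sqrt{(C/2-t)_+/L_{\max}}-\sqrt{\Gamma_H}\Big)_+^2 .
\]
Taking the minimum over \(t\) yields the stated bound. The one point to double-check is that the paper's Hellinger normalization \(1-\int\sqrt{pq}\) matches the half-squared-\(L_2\) identity used above, since that is exactly what turns \(\Gamma_H/2\) into \(\sqrt{\Gamma_H}\).
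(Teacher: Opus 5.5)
Your proposal is correct and follows the paper's proof essentially step for step: Corollary~\ref{cor:cvar-lower-bt} with $f=H^2$ and $Q=P_2$, the budget bound $B=\tfrac12 D_{H^2}(P_1\|P_2)\le\Gamma_H/2$, the hinge bound $b_t\ge (C/2-t)_+/L_{\max}$ from (H1) (your subadditivity of $(\cdot)_+$ is the paper's convexity step), and the Bernoulli inversion $a^-_{H^2}(B;b)\ge(\sqrt b-\sqrt{2B})_+^2$ obtained by dropping the second term of the Bernoulli Hellinger identity. Your remark that this explicit bound is monotone in $b$ and $B$ spells out a substitution the paper makes implicitly, and the normalization you flag does hold, since $1-\int\sqrt{pq}=\tfrac12\int(\sqrt p-\sqrt q)^2$.
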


\textit{Proof sketch:}
Apply Corollary~\ref{cor:cvar-lower-bt} with \(f=H^2\) and reference law \(Q=P_2\). Use \((\mathrm{H1})\) to lower bound the hinge benchmark \(b_t\), use \((\mathrm{H2})\) to bound the Hellinger budget, and then apply the Bernoulli Hellinger inversion
\[
a^-_{H^2}(B;b)\ge (\sqrt b-\sqrt{2B})_+^2.
\]
\qed

A simplified version is convenient for the balanced two-point constructions used below.

\begin{corollary}[Balanced-pair Hellinger simplification]
\label{cor:balanced-two-point-hellinger}
Under the assumptions of Lemma~\ref{lem:two-point-hellinger-cvar}, if in addition
\[
L(M_1,x)+L(M_2,x)\ge L_{\max}
\qquad
\text{for all }x\in\mathcal X,
\tag{H3}
\]
then, for every \(\alpha\in[0,1)\),
\[
\textup{CVaR}_\alpha^{\mu\otimes P^{M,\textup{Alg}}}(L)
\ge
\]
\[
\min_{t\in\mathbb R}
\left\{
t+\frac{L_{\max}}{1-\alpha}
\left(
\sqrt{\left(\frac12-\frac{t}{L_{\max}}\right)_+}
-\sqrt{\Gamma_H}
\right)_+^2
\right\}.
\]
\end{corollary}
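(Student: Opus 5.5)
The plan is to derive this directly from Lemma~\ref{lem:two-point-hellinger-cvar} by choosing \(C=L_{\max}\) and then applying a monotonicity argument; no new information-theoretic work is needed.

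\textbf{Step 1: the hypotheses of the Lemma hold with \(C=L_{\max}\).} Assumption (H3) is exactly (H1) with this choice of \(C\). The choice is admissible, since \(L_{\max}\in[0,2L_{\max}]\). Assumption (H2) is inherited unchanged from the hypotheses of Lemma~\ref{lem:two-point-hellinger-cvar}.

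\textbf{Step 2: substitute into the Lemma.} Lemma~\ref{lem:two-point-hellinger-cvar} then gives, for every \(\alpha\in[0,1)\), the lower bound
\[
\min_{t\in\mathbb R}\Bigl\{t+\tfrac{L_{\max}}{1-\alpha}\bigl(\sqrt{((L_{\max}/2-t)/L_{\max})_+}-\sqrt{\Gamma_H}\bigr)_+^2\Bigr\}.
\]
The algebraic identity \((L_{\max}/2-t)/L_{\max}=\tfrac12-t/L_{\max}\) turns this into exactly the displayed expression of the corollary.

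\textbf{Step 3 (only for a weaker constant).} If (H1) is only available with some \(C\ge L_{\max}\), then (H3) already lets us take \(C=L_{\max}\), so nothing further is required. Alternatively, one can observe that the objective inside the minimum is nondecreasing in \(C\). This holds because the map \(s\mapsto(\sqrt{s_+}-\sqrt{\Gamma_H})_+^2\) is nondecreasing. Hence the minimum over \(t\) is also nondecreasing in \(C\), so any valid \(C\ge L_{\max}\) yields a bound at least as large as the one displayed.

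\textbf{Main obstacle.} Nothing here is genuinely difficult. The only point requiring care is confirming that \(C=L_{\max}\) lies in the permitted range \([0,2L_{\max}]\). It is also worth noting that the case \(L_{\max}=0\) is degenerate: there the loss vanishes identically, so the bound is trivial, and we may assume \(L_{\max}>0\) whenever dividing by it.
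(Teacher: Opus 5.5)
Your proposal is correct and matches the paper's proof exactly: the paper also sets \(C=L_{\max}\) in Lemma~\ref{lem:two-point-hellinger-cvar}, which is admissible since \(L_{\max}\in[0,2L_{\max}]\), and then rewrites \((L_{\max}/2-t)/L_{\max}\) as \(\tfrac12-t/L_{\max}\). Your Step~3 monotonicity-in-\(C\) remark and your note on the degenerate case \(L_{\max}=0\) are valid but not needed for the statement.
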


\textit{Proof sketch:}
Set \(C=L_{\max}\) in Lemma~\ref{lem:two-point-hellinger-cvar}. \qed

The next proposition gives a baseline comparison with Bayesian expected loss.

\begin{proposition}[Bayesian \(\textup{CVaR}\) dominates Bayesian expected loss]
\label{prop:cvar-dominates-bayes-risk}
Let \(L\) be any integrable loss, and let \(\alpha\in[0,1)\). Then
\(
\textup{CVaR}_\alpha(L)\ge \mathbb{E}[L].
\)
Consequently, for any prior \(\mu\),
\[
\textup{CVaR}_\alpha^{\mu\otimes P^{M,\textup{Alg}}}(L)
\ge
\mathbb{E}_{\mu\otimes P^{M,\textup{Alg}}}[L].
\]
In particular, if for some prior \(\mu\) and some quantity \(\psi>0\) one has
\[
\mathbb{E}_{\mu\otimes P^{M,\textup{Alg}}}[L]\ge c\,\psi
\]
for a constant \(c>0\), then
\[
\textup{CVaR}_\alpha^{\mu\otimes P^{M,\textup{Alg}}}(L)\ge c\,\psi.
\]
Moreover,
\[
\sup_{\mu}\textup{CVaR}_\alpha^{\mu\otimes P^{M,\textup{Alg}}}(L)
\ge
\sup_{\mu}\mathbb{E}_{\mu\otimes P^{M,\textup{Alg}}}[L].
\]
\end{proposition}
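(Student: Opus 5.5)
The plan is to work directly from the variational definition of \(\textup{CVaR}_\alpha\) used in the paper. For every fixed \(t\in\mathbb R\) we lower bound the objective \(t+\frac{1}{1-\alpha}\mathbb E[(L-t)_+]\) by \(\mathbb E[L]\), and then pass to the minimum over \(t\). Write \(F(t):=t+\frac{1}{1-\alpha}\mathbb E[(L-t)_+]\). Since \(L\) is integrable, \(\mathbb E[(L-t)_+]\) is finite for every \(t\), so \(F(t)\) is finite. If the minimum is not attained, we interpret it as an infimum; the argument below gives a bound uniform in \(t\), so this does not matter.

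The key step is the pointwise inequality \((L-t)_+\ge L-t\). Together with \(\frac{1}{1-\alpha}\ge 1\) for \(\alpha\in[0,1)\) and \((L-t)_+\ge 0\), it gives
\[
F(t)\ge t+\mathbb E[(L-t)_+]\ge t+\mathbb E[L-t]=\mathbb E[L]
\]
for every \(t\in\mathbb R\). The first inequality is the only place where \(\alpha\ge 0\) enters, and it uses nonnegativity of the hinge. Taking the infimum over \(t\) yields \(\textup{CVaR}_\alpha(L)\ge\mathbb E[L]\). Note that this does not require \(L\ge 0\), only integrability.

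The consequences then follow immediately. We apply the general inequality with the underlying probability taken to be the prior-predictive joint law \(\mu\otimes P^{M,\textup{Alg}}\) on \((m,x)\). This is legitimate because the paper's Bayesian CVaR is exactly the variational formula evaluated under that law, and it gives the second display. Chaining with any hypothesised bound \(\mathbb E_{\mu\otimes P^{M,\textup{Alg}}}[L]\ge c\psi\) gives the third. For the supremum statement, the inequality holds for each \(\mu\), so \(\sup_\mu\textup{CVaR}_\alpha\ge \textup{CVaR}_\alpha^{\mu'}\ge\mathbb E_{\mu'}[L]\) for every \(\mu'\). Taking the supremum over \(\mu'\) finishes the proof.

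There is no real obstacle here. The only point needing care is well-definedness: we must ensure that \(\mathbb E[(L-t)_+]\) and \(\mathbb E[L-t]\) are finite, so that linearity of expectation applies. This holds because \(L\) is integrable, or, in the paper's setting, bounded by \(L_{\max}\).
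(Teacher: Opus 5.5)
Your proof is correct, but it takes a different route from the paper. You work in the primal: starting from the Rockafellar--Uryasev minimization that the paper uses as its definition, you bound the objective uniformly in $t$ via $\frac{1}{1-\alpha}\ge 1$ and the pointwise inequality $(L-t)_+\ge L-t$, and then take the infimum.

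The paper instead invokes the dual (risk-envelope) representation $\textup{CVaR}_\alpha(L)=\sup\{\mathbb{E}[LZ]: 0\le Z\le \frac{1}{1-\alpha},\ \mathbb{E}[Z]=1\}$, citing an external result, and observes that $Z\equiv 1$ is feasible.

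What each route buys:
\begin{itemize}
\item Your argument is fully self-contained. It needs only the stated definition, no duality theorem, and it does not care whether the minimum over $t$ is attained.
\item The paper's one-liner is shorter once duality is granted. It also shows $\textup{CVaR}_\alpha$ as a worst case over bounded reweightings of the prior-predictive law, with the plain expectation as one candidate. That argument carries over verbatim to any coherent risk measure whose risk envelope contains the constant density.
\end{itemize}

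The two proofs are really dual views of one inequality: feasibility of $Z\equiv 1$ uses exactly the same fact $1\le\frac{1}{1-\alpha}$ (i.e.\ $\alpha\ge 0$) that drives your first step. The downstream consequences are handled identically in both: instantiating under $\mu\otimes P^{M,\textup{Alg}}$, chaining with $c\,\psi$, and passing to the supremum over priors.
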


\section{Examples}
\label{sec:examples}

We now instantiate the two-point Hellinger template in a passive and an interactive Gaussian problem. In both cases, the lower bound is obtained by restricting the full problem class to a carefully chosen two-model subfamily and applying the fixed-prior two-point result on that hard pair. Since a lower bound for one fixed prior immediately implies a lower bound for the supremum over priors, the resulting theorems also yield worst-case (over priors) Bayesian \(\textup{CVaR}\) lower bounds. For readability, we state the main bounds here and defer the full proofs to Appendix~\ref{app:proofs}.

Both examples lead to the same one-dimensional minimization, so we first introduce the closed-form function that appears in the resulting bounds. For \(\alpha\in[0,1)\), define
\[
\Psi_\alpha(\rho):=
\begin{cases}
\displaystyle
\frac12-\frac{\rho^2}{2\alpha},
& 0\le \rho\le \alpha,\ \alpha>0,\\[2ex]
\displaystyle
\frac{(1-\rho)^2}{2(1-\alpha)},
& \alpha<\rho\le 1,\\[2ex]
0,
& \rho\ge 1,
\end{cases}
\]
and, for \(\alpha=0\),
\[
\Psi_0(\rho):=\frac{(1-\rho)_+^2}{2}.
\]
The function \(\Psi_\alpha\) is continuous at the breakpoints \(\rho=\alpha\) and \(\rho=1\); these branch values are obtained by the scalar minimization carried out in Lemma~\ref{lem:scalar-min} in the appendix. We will also use the optimized constant
\[
c_\alpha:=
\begin{cases}
\displaystyle
\frac{2}{27(1-\alpha)},
& 0\le \alpha\le \frac13,\\[2ex]
\displaystyle
\frac{\sqrt{\alpha}}{3\sqrt{3}},
& \frac13\le \alpha<1.
\end{cases}
\]

\subsection{A passive Gaussian mean estimation example}

We begin with a passive Gaussian example, where the transcript consists of the observations together with the final estimator output.

Consider the Gaussian mean estimation problem in which one observes
\[
Y_1,\dots,Y_n \;\;\textup{i.i.d.}\;\; \mathcal N(\theta,1),
\]
with unknown parameter \(\theta\in\mathbb R\), and outputs an estimator
\[
\widehat\theta=\widehat\theta(Y_1,\dots,Y_n).
\]
In the ISDM formulation, the transcript is
\[
X=(Y_1,\dots,Y_n,\widehat\theta).
\]

To apply the two-point Hellinger template, we restrict attention to the hard two-point subfamily
\[
\theta\in\{-\Delta,+\Delta\},
\]
where \(\Delta>0\) will be chosen later. Let
\[
M_+ := M_{+\Delta},\qquad M_-:=M_{-\Delta},
\qquad
\mu:=\frac12\delta_{M_+}+\frac12\delta_{M_-}.
\]
We equip this subfamily with the bounded estimation loss
\[
L(\theta,X):=\min\{|\widehat\theta-\theta|,\;2\Delta\}.
\]

The following theorem gives a fixed-prior Bayesian lower bound on this hard subfamily and hence also a worst-case Bayesian lower bound for the Gaussian mean estimation problem.

\begin{theorem}[Gaussian mean estimation: \(\alpha\)-dependent Bayesian \(\textup{CVaR}\) lower bound]
\label{prop:gaussian-mean-hellinger-cvar}
For every estimator \(\widehat\theta\) and every \(\alpha\in[0,1)\),
\[
\textup{CVaR}_\alpha^{\mu\otimes P^{M,\textup{Alg}}}(L)
\ge
2\Delta\,\Psi_\alpha(2\sqrt{n}\,\Delta).
\]
Equivalently,
\[
\textup{CVaR}_\alpha^{\mu\otimes P^{M,\textup{Alg}}}(L)
\ge\]
\[
\min_{t\in\mathbb R}
\left\{
t+\frac{2\Delta}{1-\alpha}
\left(
\sqrt{\left(\frac{\Delta-t}{2\Delta}\right)_+}
-\sqrt{2n}\,\Delta
\right)_+^2
\right\}.
\]
In particular, optimizing over \(\Delta\) and \(\nu\) gives the worst-case lower bound
\[
\sup_{\nu \in \mathcal{P}(\mathcal{M})}\textup{CVaR}_\alpha^{\nu\otimes P^{M,\textup{Alg}}}(L)
\ge
\sup_{\Delta>0}\textup{CVaR}_\alpha^{\mu\otimes P^{M,\textup{Alg}}}(L)
\ge
\frac{c_\alpha}{\sqrt n}.
\]
\end{theorem}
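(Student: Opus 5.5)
The plan is to apply Corollary~\ref{cor:balanced-two-point-hellinger} to the hard pair \(M_+,M_-\) with \(L_{\max}=2\Delta\). This requires checking (H3) and (H2), then carrying out the scalar minimization over \(t\), and finally optimizing over \(\Delta\).

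\emph{Step 1 (checking (H3)).} The loss takes values in \([0,2\Delta]\), so \(L_{\max}=2\Delta\). For (H3) we need, for every transcript \(x\),
\[
\min\{|\widehat\theta-\Delta|,2\Delta\}+\min\{|\widehat\theta+\Delta|,2\Delta\}\ge 2\Delta .
\]
If either term is truncated at \(2\Delta\), this is immediate. Otherwise the left side equals \(|\widehat\theta-\Delta|+|\widehat\theta+\Delta|\), which is at least \(2\Delta\) by the triangle inequality. Hence \(C=L_{\max}=2\Delta\), and (H1) holds as well.

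\emph{Step 2 (the Hellinger budget (H2)).} The last coordinate \(\widehat\theta\) of \(X\) is a measurable function of \((Y_1,\dots,Y_n)\). By data processing, applied in both directions, \(D_{H^2}(P_+\|P_-)\) equals the Hellinger divergence between \(\mathcal N(\Delta,1)^{\otimes n}\) and \(\mathcal N(-\Delta,1)^{\otimes n}\). Using \(D_{H^2}\le D_{\mathrm{KL}}\), this is at most \(n(2\Delta)^2/2=2n\Delta^2\). We therefore take \(\Gamma_H=2n\Delta^2\), so \(\sqrt{\Gamma_H}=\sqrt{2n}\,\Delta\). Substituting into Corollary~\ref{cor:balanced-two-point-hellinger} gives exactly the displayed ``equivalently'' form.

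\emph{Step 3 (the scalar minimization; main obstacle).} Here we must reduce the \(\min_t\) to \(2\Delta\,\Psi_\alpha(2\sqrt n\Delta)\).

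The first point is that the minimum over all \(t\in\mathbb R\) is too weak, since it allows arbitrarily negative \(t\). We handle this by noting that the CVaR objective \(t+\frac{1}{1-\alpha}\mathbb E[(L-t)_+]\) for a loss with values in \([0,L_{\max}]\) attains its minimum at an \(\alpha\)-quantile of \(L\), which lies in \([0,L_{\max}]\). Since the bound of Corollary~\ref{cor:cvar-lower-bt} holds pointwise in \(t\), the minimum may be restricted to \(t\in[0,2\Delta]\). I expect the justification of this restriction to be the step that needs care.

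Next we write \(v=\sqrt{1/2-t/(2\Delta)}\in[0,1/\sqrt2]\) and \(g=\sqrt{2n}\,\Delta\). The bracket then becomes \(2\Delta\,h(v)\), where
\[
h(v)=\tfrac12-v^2+\tfrac{(v-g)_+^2}{1-\alpha}.
\]
On \(v\le g\), the function \(h\) is decreasing. On \(v>g\) it is convex, because the coefficient of \(v^2\) is \(\alpha/(1-\alpha)\ge0\), and its stationary point is \(v=g/\alpha\), with value \(\tfrac12-g^2/\alpha\). With \(\rho:=\sqrt2\,g=2\sqrt n\Delta\), this value equals \(\tfrac12-\rho^2/(2\alpha)\).

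Two cases arise. If \(g/\alpha\le 1/\sqrt2\), i.e.\ \(\rho\le\alpha\), the minimum is this interior value. Otherwise it is attained at the endpoint \(v=1/\sqrt2\), giving \((1/\sqrt2-g)_+^2/(1-\alpha)=(1-\rho)_+^2/(2(1-\alpha))\). The case \(\alpha=0\) is the endpoint case. This reproduces \(\Psi_\alpha\) (Lemma~\ref{lem:scalar-min}).

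\emph{Step 4 (optimizing over \(\Delta\)).} The worst-case chain follows because \(\mu\) is one admissible prior. Writing \(2\Delta=\rho/\sqrt n\), we maximize \(\rho\,\Psi_\alpha(\rho)\) over \(\rho>0\).

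On the branch \(\rho\ge\alpha\), the quantity \(\rho(1-\rho)^2/(2(1-\alpha))\) is maximized at \(\rho=1/3\), which is admissible when \(\alpha\le1/3\), giving \(2/(27(1-\alpha))\).

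On the branch \(\rho\le\alpha\), the quantity \(\rho(\tfrac12-\rho^2/(2\alpha))\) is maximized at \(\rho=\sqrt{\alpha/3}\le\alpha\), which is admissible when \(\alpha\ge1/3\), giving \(\sqrt\alpha/(3\sqrt3)\).

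In either regime the sup over \(\Delta\) is therefore at least \(c_\alpha/\sqrt n\), as claimed.
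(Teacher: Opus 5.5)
Your proposal is correct and follows the paper's proof essentially step for step: $L_{\max}=2\Delta$ with the balanced-pair condition from the triangle inequality, the budget $\Gamma_H=2n\Delta^2$ from data processing and $D_{H^2}\le D_{\mathrm{KL}}$, Corollary~\ref{cor:balanced-two-point-hellinger}, restricting to $t\in[0,2\Delta]$ followed by the scalar minimization of Lemma~\ref{lem:scalar-min}, and finally optimizing $\rho\,\Psi_\alpha(\rho)$. Two points go slightly beyond the paper. You justify the restriction to $t\in[0,2\Delta]$ via the $\alpha$-quantile minimizer and the pointwise-in-$t$ hinge bound, which the paper only asserts, and you correctly note that the unrestricted $\min_{t\in\mathbb R}$ display is weaker. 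You also exhibit one maximizing $\rho$ in each regime, which suffices for the lower bound, whereas the paper computes the exact supremum.
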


\textit{Proof sketch:}
The proof has three steps. First, identify the bounded loss level \(L_{\max}=2\Delta\). Second, verify the balanced-pair condition via the triangle inequality:
\[
L(+\Delta,x)+L(-\Delta,x)\ge 2\Delta.
\]
Third, bound the Hellinger divergence between the transcript laws induced by \(M_+\) and \(M_-\) using the Gaussian KL comparison,
\[
D_{H^2}(P^{M_+,\textup{Alg}}\|P^{M_-,\textup{Alg}})
\le
D_{\textup{KL}}(P^{M_+,\textup{Alg}}\|P^{M_-,\textup{Alg}})
=
2n\Delta^2,
\]
and apply Corollary~\ref{cor:balanced-two-point-hellinger}. The closed form then follows from Lemma~\ref{lem:scalar-min}, and optimizing over \(\Delta\) and subsequently over \(\nu\) yields the stated worst-case \(n^{-1/2}\) lower bound. \qed

\subsection{Two-armed Gaussian bandits}

We then turn to an interactive Gaussian example, where the transcript is the full action--observation history.

Consider the two-armed Gaussian bandit problem with unit-variance rewards and horizon \(T\). At each round \(t\), the algorithm \(\textup{Alg}\) chooses an arm \(A_t\in\{1,2\}\) based on the past history and then observes a reward \(Y_t\). The resulting transcript is
\[
H_T=(A_1,Y_1,\dots,A_T,Y_T).
\]

To apply the two-point Hellinger template, we restrict attention to the two-environment subfamily
\[
M_1:=\left(+\frac{g}{2},-\frac{g}{2}\right),
\qquad
M_2:=\left(-\frac{g}{2},+\frac{g}{2}\right),
\]
where \(g>0\) will be chosen later, and equip this subfamily with the symmetric prior
\[
\mu=\frac12\delta_{M_1}+\frac12\delta_{M_2}.
\]
For \(a\in\{1,2\}\), let \(N_a(H_T)\) denote the number of pulls of arm \(a\) up to time \(T\).

The following theorem gives a fixed-prior Bayesian lower bound on this hard two-environment family and hence also a worst-case Bayesian lower bound for the two-armed Gaussian bandit problem.

\begin{theorem}[Two-armed Gaussian bandits: \(\alpha\)-dependent worst-case Bayesian \(\textup{CVaR}\) lower bound]
\label{thm:two-armed-gaussian-hellinger-cvar}
For every algorithm \(\textup{Alg}\) and every \(\alpha\in[0,1)\),
\[
\textup{CVaR}_\alpha^{\mu\otimes P^{M,\textup{Alg}}}(R_T)
\ge
gT\,\Psi_\alpha(g\sqrt{T}).
\]
Equivalently,
\[
\textup{CVaR}_\alpha^{\mu\otimes P^{M,\textup{Alg}}}(R_T)
\ge\]
\[
\min_{t\in\mathbb R}
\left\{
t+\frac{gT}{1-\alpha}
\left(
\sqrt{\left(\frac{gT/2-t}{gT}\right)_+}
-
\frac{g\sqrt{T}}{\sqrt2}
\right)_+^2
\right\},
\]
and optimizing over \(g\) and \(\nu\) gives the worst-case lower bound
\[
\sup_{\nu \in \mathcal{P}(\mathcal{M})}\textup{CVaR}_\alpha^{\nu\otimes P^{M,\textup{Alg}}}(R_T)
\ge
\sup_{g>0}\textup{CVaR}_\alpha^{\mu\otimes P^{M,\textup{Alg}}}(R_T)
\ge
c_\alpha \sqrt{T}.
\]
\end{theorem}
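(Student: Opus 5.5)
The plan is to apply Corollary~\ref{cor:balanced-two-point-hellinger} to the hard pair \((M_1,M_2)\), with loss \(L=R_T\). I take \(R_T\) to be the pseudo-regret of the transcript, so \(R_T(M_1,H_T)=g\,N_2(H_T)\) and \(R_T(M_2,H_T)=g\,N_1(H_T)\), since the gap between the arms is \(g\) in both environments. The argument mirrors the proof of Theorem~\ref{prop:gaussian-mean-hellinger-cvar}: identify \(L_{\max}\), check the balanced-pair condition, bound the Hellinger budget, then invoke the scalar minimization.

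\emph{Step 1 (boundedness and balance).} Because \(0\le N_a(H_T)\le T\), we have \(0\le R_T\le gT\), so we set \(L_{\max}=gT\). Because \(N_1(H_T)+N_2(H_T)=T\) for every transcript,
\[
R_T(M_1,H_T)+R_T(M_2,H_T)=g\bigl(N_1(H_T)+N_2(H_T)\bigr)=gT=L_{\max}.
\]
Hence (H1) holds with \(C=L_{\max}\), and (H3) holds with equality.

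\emph{Step 2 (Hellinger budget).} This is the only genuinely interactive step, and I expect it to be the main obstacle. The transcript law depends on the adaptive arm choices, so a product-form computation is not available. I would use the chain rule for KL over \(H_T\). The algorithm's action kernels are the same under both models, so they cancel in the likelihood ratio. This gives the standard divergence decomposition
\[
D_{\mathrm{KL}}(P_1\|P_2)=\sum_{a\in\{1,2\}}\mathbb{E}_{P_1}[N_a(H_T)]\,
D_{\mathrm{KL}}\bigl(\mathcal N(\mu_a^{(1)},1)\|\mathcal N(\mu_a^{(2)},1)\bigr).
\]
Each arm's mean differs by \(g\) between \(M_1\) and \(M_2\), so each per-arm KL equals \(g^2/2\). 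The sum is then \(\tfrac{g^2}{2}\,\mathbb{E}_{P_1}[N_1+N_2]=g^2T/2\), whatever the algorithm does. By \(D_{H^2}\le D_{\mathrm{KL}}\), (H2) holds with \(\Gamma_H=g^2T/2\), so \(\sqrt{\Gamma_H}=g\sqrt T/\sqrt2\). Substituting \(L_{\max}=gT\) and this \(\Gamma_H\) into Corollary~\ref{cor:balanced-two-point-hellinger} gives exactly the displayed min-over-\(t\) expression.

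\emph{Step 3 (closed form and optimization).} The min-over-\(t\) expression is converted to closed form by Lemma~\ref{lem:scalar-min}. Under the rescaling \(t=L_{\max}s\), the relevant parameter is \(\rho=\sqrt{2\Gamma_H}=g\sqrt T\), the same \(\rho=\sqrt{2\Gamma_H}\) that yields \(2\sqrt n\Delta\) in the Gaussian-mean case. The bound therefore equals \(gT\,\Psi_\alpha(g\sqrt T)=\sqrt T\,\rho\Psi_\alpha(\rho)\).

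Next, maximize \(\rho\Psi_\alpha(\rho)\) over \(\rho\), which is the same as maximizing over \(g\).
\begin{itemize}
\item For \(\alpha\le 1/3\), on the branch \(\rho\in(\alpha,1]\), the function \(\rho(1-\rho)^2/(2(1-\alpha))\) is maximized at \(\rho=1/3\). This point is admissible since \(1/3\ge\alpha\), and the value is \(2/(27(1-\alpha))\).
\item For \(\alpha\ge1/3\), on the branch \(\rho\le\alpha\), the function \(\rho/2-\rho^3/(2\alpha)\) has stationary point \(\rho=\sqrt{\alpha/3}\). This point is admissible since \(\sqrt{\alpha/3}\le\alpha\), and the value is \(\sqrt\alpha/(3\sqrt3)\).
\end{itemize}
In both cases the value is \(c_\alpha\), so \(\sup_{g>0}\ge c_\alpha\sqrt T\).

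Finally, the two-point prior \(\mu\) is one admissible \(\nu\in\mathcal P(\mathcal M)\), so the supremum over priors dominates the supremum over \(g\). This gives the worst-case bound.
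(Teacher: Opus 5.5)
Your Steps~1 and~2 coincide with the paper's proof. Your divergence decomposition in fact supplies the identity $D_{\textup{KL}}(P_1\|P_2)=g^2T/2$ that the paper only asserts. Exhibiting $\rho=1/3$ or $\rho=\sqrt{\alpha/3}$ is enough for the final bound.

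The gap is in Step~3. Lemma~\ref{lem:scalar-min} computes an infimum over $x\in[0,1/2]$, i.e.\ over $t\in[0,gT/2]$. You apply it to the minimum over \emph{all} $t\in\mathbb R$ of the displayed expression, and these are not equal. Put $u=\tfrac12-\tfrac{t}{gT}$ and $\rho=g\sqrt T$. The expression becomes $gT\bigl[\tfrac12-u+(\sqrt u-\rho/\sqrt2)_+^2/(1-\alpha)\bigr]$, and $t<0$ corresponds to $u>1/2$.
\begin{itemize}
\item For $\alpha>0$, its minimum over $u\ge0$ is attained at $u=\rho^2/(2\alpha^2)$, with value $gT\bigl(\tfrac12-\tfrac{\rho^2}{2\alpha}\bigr)$. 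This is strictly below $gT\,\Psi_\alpha(\rho)$ whenever $\rho>\alpha$, and negative once $\rho>\sqrt\alpha$.
\item For $\alpha=0$, the expression tends to $-\infty$ as $t\to-\infty$.
\end{itemize}
So the min-over-$\mathbb R$ display does not yield $gT\,\Psi_\alpha(g\sqrt T)$ when $g\sqrt T>\alpha$. In particular, it does not give the value $2/(27(1-\alpha))$ at $\rho=1/3$ for $\alpha<1/3$. Only your second bullet, where $\rho=\sqrt{\alpha/3}\le\alpha$, survives as written.

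The missing step, which the paper takes at the start of its Step~4, is to restrict $t$ \emph{before} substituting the hinge bound. The restriction must be justified on the CVaR objective itself, not on the displayed lower bound. Since $0\le R_T\le gT$, consider the objective $t+\frac{1}{1-\alpha}\mathbb E[(R_T-t)_+]$:
\begin{itemize}
\item for $t<0$ it equals $(\mathbb E[R_T]-\alpha t)/(1-\alpha)$, which is at least its value at $t=0$;
\item for $t\ge gT$ it equals $t\ge gT$, which is at least its value at $t=gT$.
\end{itemize}
Hence the CVaR is a minimum over $t\in[0,gT]$. This is also the only range on which $\phi_t$ actually takes values in $[0,1]$, as Corollary~\ref{cor:hinge-lower} requires. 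On $[gT/2,gT]$ the hinge bound vanishes, so the objective is at least $gT/2$, the value at $x=0$. On $[0,gT/2]$ the substitution $x=\tfrac12-\tfrac{t}{gT}\in[0,\tfrac12]$ lands exactly in Lemma~\ref{lem:scalar-min}. With this step added, your argument is the paper's.

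For the same reason, the min-over-$\mathbb R$ display in the statement is a strictly weaker bound than $gT\,\Psi_\alpha(g\sqrt T)$, not an equivalent one. The closed form has to be derived from the restricted minimization.
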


\textit{Proof sketch:}
The proof has the same three-step structure as in the passive example. First, identify the bounded loss level \(L_{\max}=gT\). Second, verify the balanced-pair condition:
\[
R_T(M_1,h)+R_T(M_2,h)=gT
\]
for every transcript \(h\). Third, bound the Hellinger divergence through the Gaussian KL comparison
\[
D_{H^2}(P^{M_1,\textup{Alg}}\|P^{M_2,\textup{Alg}})
\le
D_{\textup{KL}}(P^{M_1,\textup{Alg}}\|P^{M_2,\textup{Alg}})
=
\frac{g^2T}{2},
\]
and apply Corollary~\ref{cor:balanced-two-point-hellinger}. The closed form follows from Lemma~\ref{lem:scalar-min}, and optimizing over \(g\) and subsequently over \(\nu\) yields the stated worst-case \(\sqrt{T}\) lower bound. \qed

\begin{remark}[Interpretation and relation to the examples]
\label{rem:comparison-with-bayes-risk}
Proposition~\ref{prop:cvar-dominates-bayes-risk} shows that any Bayes-risk lower bound immediately yields a \(\textup{CVaR}_\alpha\) lower bound of the same order. Thus, Bayes expected risk provides a natural baseline for interpreting tail-sensitive converse bounds. The value of the generalized-Fano corollaries is that they target the tail directly under the prior-predictive law, provide an instantiation template for both non-interactive and interactive regimes, and can yield more informative statements than expectation alone through explicit dependence on \(\alpha\).

For the Gaussian estimation and two-armed bandit examples, the bounds obtained from the two-point Hellinger template recover the same scaling as the corresponding Bayes-risk benchmarks, namely \(n^{-1/2}\) and \(\sqrt{T}\), respectively. At the same time, unlike the expected Bayes-risk bound, they retain \(\alpha\)-dependence through \(\Psi_\alpha(\cdot)\) and \(c_\alpha\). In this sense, the examples show that the generalized-Fano corollaries are both calibrated to the known Bayesian scaling in canonical two-point problems and capable of capturing tail-sensitive behavior.
\end{remark}
\section{Conclusion and Future Work}

We instantiate the generalized-Fano Bayesian \(\textup{CVaR}\) framework through a reusable two-point Hellinger template. We derive worst-case Bayesian CVaR lower bounds for canonical problems such as Gaussian mean estimation and two-armed Gaussian bandits showing the applicability of the Hellinger template. The resulting bounds recover the known Bayesian expected-risk scaling while also revealing tail risk level \(\alpha\)-dependence, which is invisible to expected-risk lower bounds. Natural next steps are to understand whether multi-point versions of the present Hellinger template can recover sharp Bayesian \(\textup{CVaR}\) scalings in \(K\)-armed bandits and episodic Markov decision processes while preserving explicit dependence on the risk level \(\alpha\), and to compare those lower bounds with matching upper bounds for risk-sensitive algorithms.

\IEEEpeerreviewmaketitle

\bibliographystyle{IEEEtran}
\bibliography{references}
\newpage
\appendix
\section{Proofs}
\label{app:proofs}

\begin{lemma}[Scalar minimization]
\label{lem:scalar-min}
For \(\alpha\in[0,1)\) and \(\rho\ge 0\), define
\[
F_{\alpha,\rho}(x):=
\frac12-x+\frac{(\sqrt{x}-\rho/\sqrt2)_+^2}{1-\alpha},
\qquad x\in[0,1/2].
\]
Then
\[
\inf_{x\in[0,1/2]}F_{\alpha,\rho}(x)=\Psi_\alpha(\rho).
\]
Moreover,
\[
\sup_{\rho\ge 0}\rho\,\Psi_\alpha(\rho)=c_\alpha.
\]
\end{lemma}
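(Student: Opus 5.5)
The minimization is elementary; I will substitute \(s=\sqrt{x}\in[0,1/\sqrt2]\) and write \(r:=\rho/\sqrt2\), so that
\[
F_{\alpha,\rho}(s^2)=\tfrac12-s^2+\frac{(s-r)_+^2}{1-\alpha}.
\]
I will split according to whether \(s\le r\) or \(s>r\).

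\textbf{Region \(s\le r\).} Here the hinge term vanishes and \(F=\tfrac12-s^2\) is decreasing in \(s\). Its infimum over \(s\in[0,\min\{r,1/\sqrt2\}]\) is \(\tfrac12-\min\{r^2,\tfrac12\}=(\tfrac12-\rho^2/2)_+\).

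\textbf{Region \(s\ge r\).} Here \(g(s):=\tfrac12-s^2+(s-r)^2/(1-\alpha)\) has \(g''=2\alpha/(1-\alpha)\ge0\), so \(g\) is convex for \(\alpha>0\). Its stationary point is \(s^*=r/\alpha\), with value
\[
g(s^*)=\tfrac12-\frac{r^2}{\alpha}=\tfrac12-\frac{\rho^2}{2\alpha}.
\]
This point is feasible exactly when \(r/\alpha\le1/\sqrt2\), i.e.\ \(\rho\le\alpha\). If \(\rho>\alpha\), then \(s^*\) lies beyond the right endpoint, \(g\) is decreasing on \([r,1/\sqrt2]\), and the minimum is at \(s=1/\sqrt2\):
\[
g(1/\sqrt2)=\frac{(1/\sqrt2-r)^2}{1-\alpha}=\frac{(1-\rho)^2}{2(1-\alpha)},
\]
valid when \(\rho\le1\). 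When \(\rho\ge1\), the region \(s\ge r\) is empty or just the point \(s=r=1/\sqrt2\), and the first region already gives \(0\).

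\textbf{Comparing the regions.} The first region's value \(\tfrac12-\rho^2/2\) is attained at \(s=r\), which also lies in the second region. So the overall infimum is the second region's minimum whenever \(\rho\le1\), and \(0\) when \(\rho\ge1\). This is exactly \(\Psi_\alpha\).

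For \(\alpha=0\), \(g\) is linear in \(s\) with slope \(-2r\le0\), so the minimum is at \(s=1/\sqrt2\) and equals \((1-\rho)_+^2/2=\Psi_0(\rho)\). This is consistent because the first branch of \(\Psi_\alpha\) is empty when \(\alpha=0\).

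\textbf{Maximizing \(\rho\Psi_\alpha(\rho)\).} I optimize on each branch and compare.

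On \([\alpha,1]\), \(h_2(\rho)=\rho(1-\rho)^2/(2(1-\alpha))\) has its unconstrained maximizer at \(\rho=1/3\), with value \(2/(27(1-\alpha))\). This point is admissible iff \(\alpha\le1/3\); otherwise \(h_2\) is decreasing on \([\alpha,1]\) and its maximum is at the endpoint \(\rho=\alpha\).

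On \([0,\alpha]\), \(h_1(\rho)=\rho/2-\rho^3/(2\alpha)\) has \(h_1'=0\) at \(\rho=\sqrt{\alpha/3}\), with value
\[
h_1\bigl(\sqrt{\alpha/3}\bigr)=\sqrt{\alpha/3}\cdot\tfrac13=\frac{\sqrt\alpha}{3\sqrt3}.
\]
This point is admissible iff \(\sqrt{\alpha/3}\le\alpha\), i.e.\ \(\alpha\ge1/3\); otherwise \(h_1\) is increasing on \([0,\alpha]\) and its maximum is at the endpoint \(\rho=\alpha\).

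Since \(\Psi_\alpha\) is continuous, both branches agree at \(\rho=\alpha\). Hence for \(\alpha\le1/3\) the global maximum is \(h_2(1/3)\), and for \(\alpha\ge1/3\) it is \(h_1(\sqrt{\alpha/3})\). Both formulas equal \(1/18\) at \(\alpha=1/3\), confirming continuity of \(c_\alpha\).

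The main obstacle is bookkeeping: tracking the case boundaries \(\rho=\alpha\) and \(\rho=1\), and checking that the boundary point \(s=r\) is handled consistently between the two regions.
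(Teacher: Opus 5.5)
Your proposal is correct and follows essentially the same route as the paper. Both use the substitution $s=\sqrt{x}$, split at $s=\rho/\sqrt2$, analyze the convex quadratic with stationary point $\rho/(\sqrt2\,\alpha)$ (feasible iff $\rho\le\alpha$), and maximize $\rho\Psi_\alpha$ branchwise using the critical points $\sqrt{\alpha/3}$ and $1/3$; your use of the shared points $s=r$ and $\rho=\alpha$ neatly replaces the paper's explicit comparison inequalities, but one cosmetic slip: at $\alpha=1/3$ both expressions for $c_\alpha$ equal $1/9$, not $1/18$.
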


\begin{proof}
Write \(x=s^2\), with \(s\in[0,1/\sqrt2]\). Then
\[
F_{\alpha,\rho}(x)
=
\frac12-s^2+\frac{(s-\rho/\sqrt2)_+^2}{1-\alpha}.
\]

We first identify \(\inf_{x\in[0,1/2]}F_{\alpha,\rho}(x)\).

If \(0\le s\le \rho/\sqrt2\), then
\[
F_{\alpha,\rho}(x)=\frac12-s^2,
\]
so the minimum on this region is
\[
\begin{cases}
\frac12-\frac{\rho^2}{2}, & 0\le \rho\le 1,\\[1ex]
0, & \rho\ge 1.
\end{cases}
\]

Now suppose \(\rho/\sqrt2\le s\le 1/\sqrt2\). Then
\[
F_{\alpha,\rho}(x)
=
\frac12-s^2+\frac{(s-\rho/\sqrt2)^2}{1-\alpha}.
\]

If \(\alpha=0\), this becomes
\[
\frac12-\sqrt2\,\rho\,s+\frac{\rho^2}{2},
\]
which is decreasing in \(s\), hence minimized at \(s=1/\sqrt2\), with value
\[
\frac{(1-\rho)^2}{2}
\qquad (0\le \rho\le 1).
\]
Since
\[
\frac{(1-\rho)^2}{2}\le \frac12-\frac{\rho^2}{2}
\qquad (0\le \rho\le 1),
\]
it follows that
\[
\inf_{x\in[0,1/2]}F_{0,\rho}(x)
=
\begin{cases}
\frac{(1-\rho)^2}{2}, & 0\le \rho\le 1,\\[1ex]
0, & \rho\ge 1,
\end{cases}
=
\Psi_0(\rho).
\]

Assume now \(0<\alpha<1\). On \([\rho/\sqrt2,1/\sqrt2]\),
\[
G_{\alpha,\rho}(s):=
\frac12-s^2+\frac{(s-\rho/\sqrt2)^2}{1-\alpha}
\]
is a convex quadratic with
\[
G'_{\alpha,\rho}(s)=\frac{2\alpha}{1-\alpha}s-\frac{\sqrt2\,\rho}{1-\alpha},
\]
so its critical point is
\[
s^\star=\frac{\rho}{\sqrt2\,\alpha}.
\]

If \(0\le \rho\le \alpha\), then \(s^\star\in[\rho/\sqrt2,1/\sqrt2]\), and
\[
G_{\alpha,\rho}(s^\star)=\frac12-\frac{\rho^2}{2\alpha}.
\]
If \(\alpha<\rho\le 1\), then \(s^\star>1/\sqrt2\), so \(G_{\alpha,\rho}\) is decreasing on \([\rho/\sqrt2,1/\sqrt2]\), and its minimum is attained at \(s=1/\sqrt2\), giving
\[
G_{\alpha,\rho}(1/\sqrt2)=\frac{(1-\rho)^2}{2(1-\alpha)}.
\]
Finally, if \(\rho\ge 1\), choosing \(s=1/\sqrt2\) gives value \(0\).

Comparing with the first region,
\[
\frac12-\frac{\rho^2}{2\alpha}\le \frac12-\frac{\rho^2}{2}
\qquad (0\le \rho\le \alpha),
\]
and
\[
\frac{(1-\rho)^2}{2(1-\alpha)}\le \frac12-\frac{\rho^2}{2}
\qquad (\alpha<\rho\le 1).
\]
Hence
\[
\inf_{x\in[0,1/2]}F_{\alpha,\rho}(x)=\Psi_\alpha(\rho)
\]
for all \(\alpha\in[0,1)\).

It remains to maximize \(\rho\Psi_\alpha(\rho)\). Since \(\Psi_\alpha(\rho)=0\) for \(\rho\ge 1\), it suffices to work on \([0,1]\).

If \(\alpha=0\), then
\[
\rho\Psi_0(\rho)=\frac{\rho(1-\rho)^2}{2},\qquad 0\le \rho\le 1,
\]
whose derivative is
\[
\frac{(1-\rho)(1-3\rho)}{2}.
\]
Thus the maximum is attained at \(\rho=1/3\), with value
\[
\frac{(1/3)(2/3)^2}{2}=\frac{2}{27}=c_0.
\]

Now let \(0<\alpha<1\). We have
\[
\rho\Psi_\alpha(\rho)=
\begin{cases}
\displaystyle
\frac{\rho}{2}-\frac{\rho^3}{2\alpha}, & 0\le \rho\le \alpha,\\[2ex]
\displaystyle
\frac{\rho(1-\rho)^2}{2(1-\alpha)}, & \alpha<\rho\le 1.
\end{cases}
\]

On \([0,\alpha]\), the derivative is
\[
\frac12-\frac{3\rho^2}{2\alpha},
\]
so the critical point is \(\rho=\sqrt{\alpha/3}\). Hence the maximum on \([0,\alpha]\) is
\[
\frac{\sqrt{\alpha}}{3\sqrt3}
\quad\text{if }\alpha\ge \frac13,
\]
and otherwise is attained at \(\rho=\alpha\), with value
\[
\frac{\alpha(1-\alpha)}{2}.
\]

On \((\alpha,1]\), the derivative is
\[
\frac{(1-\rho)(1-3\rho)}{2(1-\alpha)},
\]
so the interior critical point is \(\rho=1/3\). Hence the maximum on \((\alpha,1]\) is
\[
\frac{2}{27(1-\alpha)}
\quad\text{if }\alpha\le \frac13,
\]
and otherwise is attained at the boundary \(\rho=\alpha\), with value
\[
\frac{\alpha(1-\alpha)}{2}.
\]

Since the two branches agree at \(\rho=\alpha\), it remains only to compare the boundary value with the interior one. For \(0\le \alpha\le 1/3\),
\[
\frac{\alpha(1-\alpha)}{2}\le \frac{2}{27(1-\alpha)},
\]
and for \(1/3\le \alpha<1\),
\[
\frac{\alpha(1-\alpha)}{2}\le \frac{\sqrt{\alpha}}{3\sqrt3}.
\]
Therefore
\[
\sup_{\rho\ge 0}\rho\,\Psi_\alpha(\rho)=
\begin{cases}
\displaystyle
\frac{2}{27(1-\alpha)}, & 0\le \alpha\le \frac13,\\[2ex]
\displaystyle
\frac{\sqrt{\alpha}}{3\sqrt3}, & \frac13\le \alpha<1,
\end{cases}
\]
which is exactly \(c_\alpha\).
\end{proof}

\begin{proof}[\textbf{Proof of Lemma~\ref{lem:two-point-hellinger-cvar}}]
Apply Corollary~\ref{cor:cvar-lower-bt} with \(f=H^2\) and reference law
\[
Q:=P_2.
\]
For each \(t\in\mathbb R\), define
\[
b_t:=\frac{1}{L_{\max}}\mathbb E_{\mu,Q}[(L-t)_+].
\]
Since \(\mu=\frac12(\delta_{M_1}+\delta_{M_2})\),
\[
b_t
=
\frac{1}{2L_{\max}}
\mathbb E_{P_2}\big[(L(M_1,X)-t)_+ + (L(M_2,X)-t)_+\big].
\]
Fix \(x\in\mathcal X\). By convexity of \(u\mapsto (u-t)_+\),
\[
\frac12\Big((L(M_1,x)-t)_+ + (L(M_2,x)-t)_+\Big)
\ge\]
\[
\left(\frac{L(M_1,x)+L(M_2,x)}{2}-t\right)_+.
\]
Using assumption \((\mathrm{H1})\),
\[
\frac12\Big((L(M_1,x)-t)_+ + (L(M_2,x)-t)_+\Big)
\ge
\left(\frac{C}{2}-t\right)_+.
\]
Taking \(\mathbb E_{P_2}\) and dividing by \(L_{\max}\) yields
\[
b_t\ge \left(\frac{C/2-t}{L_{\max}}\right)_+.
\tag{A.7}
\]

Next,
\[
B
=
\mathbb E_{M\sim\mu}D_{H^2}(P^{M,\textup{Alg}}\|Q)
=\]
\[
\frac12D_{H^2}(P_1\|P_2)+\frac12D_{H^2}(P_2\|P_2)
=
\frac12D_{H^2}(P_1\|P_2).
\]
Using \((\mathrm{H2})\),
\[
B\le \frac{\Gamma_H}{2}.
\tag{A.8}
\]

Corollary~\ref{cor:cvar-lower-bt} gives
\[
\textup{CVaR}_\alpha^{\mu\otimes P^{M,\textup{Alg}}}(L)
\ge
\min_{t\in\mathbb R}
\left\{
t+\frac{L_{\max}}{1-\alpha}\,a^-_{H^2}(B;b_t)
\right\}.
\tag{A.9}
\]
For Bernoulli laws, one has
\[
D_{H^2}\!\bigl(\textup{Bern}(a)\,\big\|\,\textup{Bern}(b)\bigr)
=
1-\sqrt{ab}-\sqrt{(1-a)(1-b)}
=\]
\[
\frac12\Bigl((\sqrt a-\sqrt b)^2+(\sqrt{1-a}-\sqrt{1-b})^2\Bigr).
\]
Hence
\[
D_{H^2}\!\bigl(\textup{Bern}(a)\,\big\|\,\textup{Bern}(b)\bigr)
\ge \frac12(\sqrt a-\sqrt b)^2.
\]
Therefore, if
\[
D_{H^2}\!\bigl(\textup{Bern}(a)\,\big\|\,\textup{Bern}(b)\bigr)\le B,
\]
then
\[
\sqrt a\ge (\sqrt b-\sqrt{2B})_+,
\]
and so
\[
a^-_{H^2}(B;b)\ge (\sqrt b-\sqrt{2B})_+^2.
\tag{A.10}
\]

Combining \((\mathrm{A.7})\), \((\mathrm{A.8})\), and \((\mathrm{A.10})\), we obtain
\[
a^-_{H^2}(B;b_t)
\ge
\left(
\sqrt{\left(\frac{C/2-t}{L_{\max}}\right)_+}
-\sqrt{\Gamma_H}
\right)_+^2.
\]
Substituting this into \((\mathrm{A.9})\) proves the claim.
\end{proof}

\begin{proof}[Proof of Corollary~\ref{cor:balanced-two-point-hellinger}]
Set \(C=L_{\max}\) in Lemma~\ref{lem:two-point-hellinger-cvar}. This gives
\[
\textup{CVaR}_\alpha^{\mu\otimes P^{M,\textup{Alg}}}(L)
\ge
\]
\[
\min_{t\in\mathbb R}
\left\{
t+\frac{L_{\max}}{1-\alpha}
\left(
\sqrt{\left(\frac12-\frac{t}{L_{\max}}\right)_+}
-\sqrt{\Gamma_H}
\right)_+^2
\right\}.
\]
\end{proof}

\begin{proof}[\textbf{Proof of Proposition~\ref{prop:cvar-dominates-bayes-risk}}]
By the dual representation of Expected Shortfall/\(\textup{CVaR}\) \cite{ang2018dual},
\[
\textup{CVaR}_\alpha(L)
=
\sup\Bigl\{
\mathbb{E}[LZ]:
0\le Z\le \tfrac{1}{1-\alpha},\ \mathbb{E}[Z]=1
\Bigr\}.
\]
Since the choice \(Z\equiv 1\) is feasible, it follows that
\[
\textup{CVaR}_\alpha(L)\ge \mathbb{E}[L].
\]
Applying this under the Bayesian law \(\mu\otimes P^{M,\textup{Alg}}\) gives
\[
\textup{CVaR}_\alpha^{\mu\otimes P^{M,\textup{Alg}}}(L)
\ge
\mathbb{E}_{\mu\otimes P^{M,\textup{Alg}}}[L].
\]
The remaining claims follow immediately.
\end{proof}

\begin{proof}[\textbf{Proof of Theorem~\ref{prop:gaussian-mean-hellinger-cvar}}]
We verify the assumptions of Corollary~\ref{cor:balanced-two-point-hellinger}.

\medskip
\noindent
\textit{Step 1: Boundedness and balanced-pair condition.}
By definition,
\[
0\le L(\theta,X)\le 2\Delta.
\]
Hence \(L_{\max}=2\Delta\). Fix any realized estimate \(z\in\mathbb R\). Then
\[
|z-\Delta|+|z+\Delta|\ge 2\Delta
\]
by the triangle inequality, so
\[
\min\{|z-\Delta|,2\Delta\}+\min\{|z+\Delta|,2\Delta\}\ge 2\Delta.
\]
Applying this with \(z=\widehat\theta\) gives
\[
L(+\Delta,X)+L(-\Delta,X)\ge 2\Delta=L_{\max}.
\]

\medskip
\noindent
\medskip
\noindent
\textit{Step 2: Hellinger budget.}
The observation vector \(Y=(Y_1,\dots,Y_n)\) has laws
\[
\mathcal N(\Delta,1)^{\otimes n}
\qquad\text{and}\qquad
\mathcal N(-\Delta,1)^{\otimes n}
\]
under \(M_+\) and \(M_-\), respectively. Since the transcript
\[
X=(Y,\widehat\theta),\qquad \widehat\theta=\textup{Alg}(Y),
\]
is a measurable function of \(Y\), the data-processing inequality gives
\[
D_{H^2}(P^{M_+,\textup{Alg}}\|P^{M_-,\textup{Alg}})\le
D_{\textup{KL}}\bigl(\mathcal N(\Delta,1)^{\otimes n}\,\big\|\,\mathcal N(-\Delta,1)^{\otimes n}\bigr).
\]
Now
\[
D_{\textup{KL}}\bigl(\mathcal N(\Delta,1)^{\otimes n}\,\big\|\,\mathcal N(-\Delta,1)^{\otimes n}\bigr)
=\]
\[
n\,D_{\textup{KL}}(\mathcal N(\Delta,1)\|\mathcal N(-\Delta,1))
=
n\cdot \frac{(2\Delta)^2}{2}
=
2n\Delta^2.
\]
Hence we may take
\[
\Gamma_H:=2n\Delta^2.
\]

\medskip
\noindent
\text{Step 3: Apply Corollary~\ref{cor:balanced-two-point-hellinger}.}
We obtain
\[
\textup{CVaR}_\alpha^{\mu\otimes P^{M,\textup{Alg}}}(L)
\ge
\]
\[
\min_{t\in\mathbb R}
\left\{
t+\frac{2\Delta}{1-\alpha}
\left(
\sqrt{\left(\frac{\Delta-t}{2\Delta}\right)_+}
-\sqrt{2n}\,\Delta
\right)_+^2
\right\}.
\tag{A.11}
\]

\medskip
\noindent
\text{Step 4: Closed form.}
Since \(0\le L\le 2\Delta\), the minimization in \((\mathrm{A.11})\) may be restricted to \(t\in[0,2\Delta]\). Set
\[
x:=\frac12-\frac{t}{2\Delta}\in[0,1/2].
\]
Then \(t=2\Delta(1/2-x)\), and \((\mathrm{A.11})\) becomes
\[
\textup{CVaR}_\alpha^{\mu\otimes P^{M,\textup{Alg}}}(L)
\ge\]
\[
2\Delta
\inf_{x\in[0,1/2]}
\left\{
\frac12-x+\frac{(\sqrt{x}-\sqrt{2n}\,\Delta)_+^2}{1-\alpha}
\right\}.
\]
Since
\(
\sqrt{2n}\,\Delta=\frac{2\sqrt n\,\Delta}{\sqrt2},
\)
Lemma~\ref{lem:scalar-min} with \(\rho=2\sqrt n\,\Delta\) yields
\[
\textup{CVaR}_\alpha^{\mu\otimes P^{M,\textup{Alg}}}(L)
\ge
2\Delta\,\Psi_\alpha(2\sqrt n\,\Delta).
\]

Finally, write \(\rho:=2\sqrt n\,\Delta\), so that \(2\Delta=\rho/\sqrt n\). Then
\[
2\Delta\,\Psi_\alpha(2\sqrt n\,\Delta)=\frac{\rho\,\Psi_\alpha(\rho)}{\sqrt n}.
\]
Maximizing over \(\rho\ge 0\) and using Lemma~\ref{lem:scalar-min} gives
\[
\sup_{\Delta>0}
\textup{CVaR}_\alpha^{\mu\otimes P^{M,\textup{Alg}}}(L)
\ge
\frac{c_\alpha}{\sqrt n}.
\]
\end{proof}

\begin{proof}[\textbf{Proof of Theorem~\ref{thm:two-armed-gaussian-hellinger-cvar}}]
We verify the assumptions of Corollary~\ref{cor:balanced-two-point-hellinger}.

\medskip
\noindent
\text{Step 1: Balanced-pair condition and boundedness.}
Under \(M_1\), arm \(1\) is optimal and arm \(2\) is suboptimal by gap \(g\), so
\[
R_T(M_1,H_T)=g\,N_2(H_T).
\]
Under \(M_2\), arm \(2\) is optimal and arm \(1\) is suboptimal by gap \(g\), so
\[
R_T(M_2,H_T)=g\,N_1(H_T).
\]
Hence, for every transcript \(h\),
\[
R_T(M_1,h)+R_T(M_2,h)
=
g\bigl(N_1(h)+N_2(h)\bigr)
=
gT.
\]
Also,
\[
0\le R_T(M_i,H_T)\le gT,\qquad i=1,2.
\]
Thus \(L(M,H_T):=R_T(M,H_T)\) satisfies the balanced-pair condition with \(L_{\max}=gT\).

\medskip
\noindent
\text{Step 2: Hellinger budget.}
For this two-armed Gaussian construction,
\[
D_{H^2}(P^{M_1,\textup{Alg}}\|P^{M_2,\textup{Alg}})
\le
D_{\textup{KL}}(P^{M_1,\textup{Alg}}\|P^{M_2,\textup{Alg}})
=
\frac{g^2T}{2}.
\]
Hence we may take
\(
\Gamma_H:=\frac{g^2T}{2}.
\)

\medskip
\noindent
\text{Step 3: Apply Corollary~\ref{cor:balanced-two-point-hellinger}.}
We obtain
\[
\textup{CVaR}_\alpha^{\mu\otimes P^{M,\textup{Alg}}}(R_T)
\ge\]
\[
\min_{t\in\mathbb R}
\left\{
t+\frac{gT}{1-\alpha}
\left(
\sqrt{\left(\frac{gT/2-t}{gT}\right)_+}
-
\frac{g\sqrt{T}}{\sqrt2}
\right)_+^2
\right\}.
\tag{A.12}
\]

\medskip
\noindent
\text{Step 4: Closed form.}
Since \(0\le R_T\le gT\), the minimization in \((\mathrm{A.12})\) may be restricted to \(t\in[0,gT]\). Set
\[
x:=\frac12-\frac{t}{gT}\in[0,1/2].
\]
Then \(t=gT(1/2-x)\), and \((\mathrm{A.12})\) becomes
\[
\textup{CVaR}_\alpha^{\mu\otimes P^{M,\textup{Alg}}}(R_T)
\ge\]
\[
gT
\inf_{x\in[0,1/2]}
\left\{
\frac12-x+\frac{(\sqrt{x}-g\sqrt{T}/\sqrt2)_+^2}{1-\alpha}
\right\}.
\]
Lemma~\ref{lem:scalar-min} with \(\rho=g\sqrt T\) yields
\[
\textup{CVaR}_\alpha^{\mu\otimes P^{M,\textup{Alg}}}(R_T)
\ge
gT\,\Psi_\alpha(g\sqrt T).
\]

Finally, write \(\rho:=g\sqrt T\), so that \(gT=\rho\sqrt T\). Then
\[
gT\,\Psi_\alpha(g\sqrt T)=\rho\,\Psi_\alpha(\rho)\,\sqrt T.
\]
Maximizing over \(\rho\ge 0\) and using Lemma~\ref{lem:scalar-min} gives
\[
\sup_{g>0}
\textup{CVaR}_\alpha^{\mu\otimes P^{M,\textup{Alg}}}(R_T)
\ge
c_\alpha\sqrt T.
\]
\end{proof}

\end{document}